\newtheorem*{theorem}{Theorem}
\title{Revisiting Hyperparameter Tuning \\with Differential Privacy}
\author{%
Youlong Ding$^1$\thanks{This work is done during his internship in WeBank Ltd. Co., Shenzhen, China} \;\;\;\;\;\;\;\;\;\;\; Xueyang Wu$^2$\\
$^1$ Department of Computer Science and Software Engineering, \\Shenzhen University, Shenzhen, China\\
$^2$ Department of Computer Science and Engineering, \\The Hong Kong University of Science and Technology, Hong Kong SAR, China\\
\texttt{dingyoulon@gmail.com} ~~~~ \texttt{xwuba@connect.ust.hk} \\
}
\begin{document}

\maketitle

\begin{abstract}
Hyperparameter tuning is a common practice in the application of machine learning but is a typically ignored aspect in the literature on privacy-preserving machine learning due to its negative effect on the overall privacy parameter.
In this paper, we aim to tackle this fundamental yet challenging problem by providing an effective hyperparameter tuning framework with differential privacy. 
The proposed method allows us to adopt a broader hyperparameter search space and even to perform a grid search over the whole space, since its privacy loss parameter is independent of the number of hyperparameter candidates. Interestingly, it instead correlates with the  utility gained from hyperparameter searching, revealing an explicit and mandatory trade-off between privacy and utility. Theoretically, we show that its additional privacy loss bound incurred by hyperparameter tuning is upper-bounded by the squared root of the gained utility. However, we note that the additional privacy loss bound would empirically scale like a squared root of the logarithm of the utility term, benefiting from the design of doubling step.
% further saving us a significant amount in the additional privacy parameter bound.
\end{abstract}

\section{Introduction}
Differential privacy~\cite{dwork2006calibrating,dwork2014algorithmic} has been the gold standard for quantitative and rigorous reasoning about privacy leakage from the processing of private data. Applying differential privacy to machine learning~\cite{song2013stochastic,bassily2014private,DBLP:conf/ccs/AbadiCGMMT016} is a long-lasting challenge due to the dramatic reduction in the model utility compared with the non-private version~\cite{DBLP:conf/iclr/TramerB21}. 
This motivates a bunch of work dedicated to designing private learning algorithms without sacrificing utility. However, researchers typically try different hyperparameters for best possible performance but only report the privacy parameter of a single run, which corresponds to the best accuracy achieved. As shown in~\cite{papernot2022hyperparameter}, the choice of hyperparameter would cause leakage of private information such as membership inference~\cite{shokri2017membership}. This finding is aligned with the theory of differential privacy if applied strictly. Supposed one run of private learning to be $\varepsilon$-DP, if we repeat the learning process $K$ times using different hyperparameters and select the model with the highest accuracy, the privacy loss parameter would be increased to $O(k\varepsilon)$ with basic composition or $\tilde{O}(\sqrt{k}\varepsilon)$ with advanced composition~\cite{dwork2010boosting,kairouz2015composition}, essentially a multiple of the original privacy loss $\varepsilon$.
Several existing works attempt to handle such embarrassment. The stability-based approach~\cite{chaudhuri2013stability} leverages the stability assumption of the learning algorithm for improving the privacy loss bounds. RandTune~\cite{10.1145/3313276.3316377,papernot2022hyperparameter} proposes to introduce another level of uncertainty for sharpening the privacy bounds. Concretely, it first draws $K$ from a geometric distribution, then randomly and independently picks $K$ hyperparameters, and runs a private training algorithm for each selected hyperparameter. Then the total privacy parameter is shown to be bounded by 2 or 3 times of the privacy parameter for a single run. However, the largest challenge in RandTune is to guarantee the success probability of picking the best hyperparameter within only $K$ independent trials, which gets more difficult when the search space of hyperparameters is large. Therefore, a recent work~\cite{DBLP:conf/aaai/MohapatraSH0022} leverages adaptive optimizers to reduce the potential hyperparameter space. Nevertheless, the number of trials remains to be unpredictable, and therefore it is still hard to guarantee the model quality.

% Admittedly, their instance is on an SVM with a handpicked data distribution, and thus may not imply much for practical settings.

% Intuitively, hyperparameter tuning should not leak too much significant private information, especially when each training round itself is differentia

% In order to provide an effective framework for hyperparameter tuning which 1) enjoys rigorous privacy guarantees and 2) endows us full freedom to conduct hyperparameter tuning.
In this paper, we propose a framework for hyperparameter tuning with differential privacy, which 1) satisfies rigorous guarantees with differential privacy and 2) allows us to perform a grid search over all candidate hyperparameters.
% Conventionally, the privacy loss bounds of naive hyperparameter tuning approaches either grow linearly under basic composition or grow logarithmically under advanced composition to the number of hyperparameter candidates. In contrast, the privacy loss bounds of our proposed framework are independent of hyperparameter search space. 
To achieve that, we enable the overall privacy loss to be independent of both hyperparameter search space and the original privacy parameter of DP-SGD. It instead depends on the final utility of the model (i.e., accuracy on the validation set), which is an interesting property because it reveals an explicit and mandatory trade-off between privacy and utility. Additionally, our approach does not require each training run for hyperparameter selection to be differentially private. 
% It could significantly boost computational efficiency for hyperparameter tuning because each run of DP-SGD is much slower than the non-private version due to the calculation of the per-sample gradient. 
% We note that some runs have to be differentially private because the hyperparameter fitness is not always aligned between the private and non-private versions~\cite{papernot2021tempered}. And also, DP-SGD itself will introduce additional hyperparameters (i.e., clipping norm). 
This property can be utilized for efficient tuning of hyperparameters whose behavior is consistent between private and non-private versions.
Comparisons between different methods are in Table~\ref{tab:comp}.

% postprocess~\cite{vadhan2017complexity}

% DP-SGD\cite{song2013stochastic,bassily2014private,DBLP:conf/ccs/AbadiCGMMT016}
% R\'{e}nyi~\cite{mironov2017renyi}

% subsampling~\cite{balle2018privacy}

% PATE~\cite{DBLP:conf/iclr/PapernotAEGT17,DBLP:conf/iclr/PapernotSMRTE18}

\begin{table}[!ht]
\centering
% \footnotesize
\caption{Comprehensive comparison between different methods for hyperparameter tuning with DP.}
% \vspace{2mm}
\label{tab:comp}
\begin{tabular}{ccccc}
\toprule
Method & Full search &  \# runs  & Priv. run per search & Privacy budget\\ \midrule

Naive & Yes &  $|\mathcal{S}|$ & Required & $\tilde{O}(\sqrt{|\mathcal{S}|}\varepsilon$) \\
% FT-RDP\cite{papernot2022hyperparameter} & No &  $\mathbb{E}[\# runs]=\frac{\eta\cdot (1-\gamma)}{\gamma \cdot (1-\gamma^{\eta})}$ or $\mu$ & Required & $2\varepsilon$ \\
RandTune\cite{papernot2022hyperparameter,10.1145/3313276.3316377}& No &  $K\sim \mathcal{D}^{\bigstar}$ & Required & $2\varepsilon$ or $3\varepsilon$\\
% Ours & Yes &  $O({{\rm log}}_{2}((u^*-u_0)/g)|\mathcal{S}|)$ & Not required & $\varepsilon+O({{\rm log}}_{2}((u^*-u_0)/g)) \varepsilon_0$ \\\bottomrule
Ours & Yes &  $|\mathcal{S}|$ & Not required & $\varepsilon+\tilde{O}( \sqrt{\frac{u^*-u_0}{g}}) \varepsilon_0$ \\\bottomrule
% MOON \cite{li2021model} & $93.48 \pm 2.54$ \\
% SCAFFOLD \cite{karimireddy2020scaffold} & $100.48 \pm 3.24$ \\\midrule
% FedNP (ours) & $\textbf{72.42} \pm 2.39$ \\\bottomrule
\end{tabular}
 \begin{flushleft}$\bigstar$: $K$ is drawn from a Truncated Negative Binomial Distribution or a Poisson distribution. \end{flushleft} 
\end{table}

\begin{algorithm}[!htp]
    \SetAlgoLined 
	\caption{\textbf{Propose-test Hyperparameter Tuning with Doubling Step}}
	\label{alg:doubling}
	\KwIn{Hyperparameter candidates set $\mathcal{S}$; Privacy parameters $\varepsilon, \varepsilon_0>0, \delta \in(0, 1)$; Training set $\mathcal{D}_{train}$; Validation set $\mathcal{D}_{valid}$; Number of partitions  $k$; Utility granularity $g\in (0,1)$; Utility lower bound $u_0\in [0,1)$.}
	\KwOut{Model parameters $\theta^*$ along with the selected hyperparameters $s^*\in \mathcal{S}$.}  
	\BlankLine
	Initialize $s^* \leftarrow \emptyset$, $u \leftarrow u_0$, $step \leftarrow 1$;
	
	Initialize $count \leftarrow 0$ \tcp*{Help variable for analysis}
	
	Partition the training set $\mathcal{D}_{train}$ into disjoint subsets $\mathcal{D}_i$, $k=1,2,...,k$;
	
	\ForEach{s = $1, 2, ..., |\mathcal{S}|$}{
	   Initialize $u_s \leftarrow 0$;
		    
		\ForEach{$i$ = $1, 2,..., k$}{
            $\theta \leftarrow {{\rm Train}}(\mathcal{D}_i, \mathcal{S}_s)$ or ${{\rm PrivateTrain}}(\mathcal{D}_i, \mathcal{S}_s, \varepsilon, \delta)$;
        	    
            $u_s^{(i)} \leftarrow {{\rm Acc}}(\mathcal{D}_{valid}, \theta)$ \tcp*{Compute utility for each partition}

        }
        $u_s \leftarrow \frac{1}{k}\sum_{i=1}^{k} u_s^{(i)}$ \tcp*{Compute average utility across partitions}
		     
	}
	
% 	\ForEach{s in $\mathcal{S}$}{
% 	    $\gamma_s \leftarrow {{\rm Lap(4}}/(k\varepsilon_0)$ \tcp*{Pre-sample noise for each candidate}
% 	}
	
	\While{$step \neq 0$}{
		$count \leftarrow count + 1$
		
	    Let $\hat{u} \leftarrow u + step\times g + {{\rm Lap(2}}/(k\varepsilon_0))$ \tcp*{Propose a utility threshold}
        
        $isAccumulated \leftarrow \textbf{False}$;
        
	    \ForEach{s = $1, 2, ..., |\mathcal{S}|$}{
		  %  $u_s \leftarrow {{\rm BoundedEval}}(s, \mathcal{D}_{train}, \mathcal{D}_{valid}, g)$  \tcp*{Evaluate current candidate}
            
            $\gamma_s \leftarrow {{\rm Lap(4}}/(k\varepsilon_0))$ \tcp*{Sample noise}
            
		    \If{$u_s + \gamma_s \geq \hat{u}$}
		    { 
		        $s^* \leftarrow s$ \tcp*{Record the current hyperparameter}
		        
		        $u \leftarrow u + step\times g$ \tcp*{Accumulate utility}
		        
		        $isAccumulated \leftarrow \textbf{True}$;
		        
		        $step \leftarrow step \times 2$  \tcp*{Double-expand current step}
		        
		        \textbf{break;}
		    }
        }
        
        \If{not $isAccumulated$}{
            $step \leftarrow \lfloor step/2 \rfloor$ \tcp*{Double-compress current step}
		 }
		         
        \If{u $\geq$ 1}{
            \textbf{break} \tcp*{Required for termination guarantee}
		 }

	}
	
	$\theta^* \leftarrow {{\rm PrivateTrain}}(\mathcal{D}_{train}, S_{s^*}, \varepsilon, \delta)$ \tcp*{Using the selected hyperparameter}
% 	\tcp*{Using the selected hyperparameter for training}
\end{algorithm}

% The computational efficiency of DP-SGD is much slower than the non-private version because per-sample gradients calculation is friendly to the GPU's parallel execution among a training batch.

\section{Propose-test Hyperparameter Tuning with Doubling Step}
Our method described in Algorithm~\ref{alg:doubling} is inspired by a bunch of classical algorithms. Specifically, we inherit the AboveThreshold component (line 18) of Sparse Vector Techinique~\cite{dwork2009complexity} to check whether the current candidate is eligible for utility accumulation. 
% The larger its utility, the  more likely it leads to a utility accumulation. 
We also apply Subsample and Aggregate~\cite{nissim2007smooth} (lines 3-11) to help us obtain a proxy utility function with relatively low sensitivity (also used by PATE~\cite{DBLP:conf/iclr/PapernotAEGT17,DBLP:conf/iclr/PapernotSMRTE18}). To potentially decrease the number of iterations (and thus the privacy loss parameter), we use the doubling algorithms (lines 22 and
27) originally designed to solve Lowest-Common-Ancestor (LCA) in the tree. Note that the naive combination of doubling and randomized algorithm may cause the termination problem.
% (i.e., the algorithm could never terminate, with a small probability). 
To accommodate this, we add an additional threshold checking (line 16) to ensure termination. The overall privacy guarantee is as follows.

\begin{theorem}
% R\'{e}nyi
The algorithm~\ref{alg:doubling} guarantees ($\varepsilon+\tilde{O}(\sqrt{\frac{u^*-u_0}{g}}) \varepsilon_0, \delta$)-differential privacy for all $\varepsilon_0$, $\varepsilon_1 > 0$, $\delta \in (0, 1)$, $\mu_0 \in [0, 1)$, and $g\in (0,1)$.
\end{theorem}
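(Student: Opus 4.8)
The plan is to decompose the mechanism into its two data-dependent components---the hyperparameter-selection loop, which touches $\mathcal{D}_{train}$ only through the proxy utilities $u_s$, and the final $\mathrm{PrivateTrain}$ call---and to account for each before composing. The first ingredient is a sensitivity bound on the proxy utilities. Since lines 3--11 partition $\mathcal{D}_{train}$ into \emph{disjoint} subsets $\mathcal{D}_i$ and set $u_s=\frac1k\sum_i u_s^{(i)}$ with each $u_s^{(i)}\in[0,1]$, changing a single training example perturbs exactly one partition and hence alters every $u_s$ by at most $\Delta:=1/k$. This subsample-and-aggregate structure is precisely what lets the per-partition $\mathrm{Train}$ be non-private: only the low-sensitivity aggregate $u_s$ is ever exposed to the privacy mechanism.

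Next, I would show that each pass of the \texttt{while} loop (one execution of the inner \texttt{for} loop, lines 16--25) is a single AboveThreshold/Sparse-Vector instance and is $\varepsilon_0$-differentially private. Rewriting the test as $u_s+\gamma_s \ge (u+step\cdot g)+\eta$ with $\eta\sim\mathrm{Lap}(2/(k\varepsilon_0))$ and $\gamma_s\sim\mathrm{Lap}(4/(k\varepsilon_0))$, the threshold noise has scale $2\Delta/\varepsilon_0$ and the per-query noise scale $4\Delta/\varepsilon_0$, which are exactly the parameters under which AboveThreshold, halting at the first acceptance, is $\varepsilon_0$-DP. Conditioned on the loop history, $u+step\cdot g$ is a fixed, data-independent threshold, so the argument goes through \emph{adaptively}: each iteration is $\varepsilon_0$-DP given the outcomes of the previous ones.

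The crux is bounding the (random) number of iterations $count$. I would track the exponent $\ell:=\log_2 step\in\mathbb{Z}_{\ge0}$, which increases by $1$ on each acceptance and decreases by $1$ on each rejection, terminating once it would drop below $0$ (i.e.\ $step=0$). Writing $a$ for the number of accepting iterations and $b$ for the rejecting ones, the walk starts at $\ell=0$ and ends at $\ell=-1$ (or earlier, if the $u\ge1$ guard fires), so $b\le a+1$ and $count=a+b\le 2a+1$. It then remains to bound $a$: each acceptance raises $u$ by $step\cdot g\ge g$, while an acceptance at step-size $step$ requires (up to the Laplace slack) $u+step\cdot g \le \max_s u_s =: u^*$, so $u$ never overshoots $u^*$ by more than the noise and the total accumulated utility satisfies $u_{\text{final}}-u_0 \le u^*-u_0 + O\!\big(\tfrac{\log(1/\beta)}{k\varepsilon_0}\big)$ with probability $1-\beta$. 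Hence $a\le (u^*-u_0)/g$ up to lower-order terms, the $u\ge1$ check guaranteeing termination in all cases, and $count = O\!\big((u^*-u_0)/g\big)$.

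Finally I would combine the pieces. Applying advanced (adaptive) composition to the $count$ many $\varepsilon_0$-DP iterations yields an additional budget $\sqrt{2\,count\,\ln(1/\delta')}\,\varepsilon_0 + count\,\varepsilon_0(e^{\varepsilon_0}-1) = \tilde{O}\!\big(\sqrt{(u^*-u_0)/g}\big)\varepsilon_0$, the $\tilde{O}$ absorbing the $\sqrt{\ln(1/\delta')}$ factor; composing this adaptively with the final $(\varepsilon,\delta)$-DP $\mathrm{PrivateTrain}$ (whose chosen $s^*$ is merely an output of the already-accounted tuning phase) gives the claimed $\big(\varepsilon+\tilde{O}(\sqrt{(u^*-u_0)/g})\,\varepsilon_0,\ \delta\big)$-DP once the composition slack $\delta'$ is folded into $\delta$. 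The empirical $\sqrt{\log}$ behavior promised in the abstract follows from the same count bound: when accepted steps grow geometrically, $a=O(\log((u^*-u_0)/g))$, so $count$ is logarithmic on benign instances. The main obstacle is the third step: because the doubling/halving dynamics are driven by \emph{noisy} comparisons, $count$ is a data-dependent random variable, and making ``total accumulation $\le u^*-u_0$'' rigorous requires high-probability control of the overshoot together with a careful definition of $u^*$ and an argument that the $u\ge1$ guard indeed forces termination.
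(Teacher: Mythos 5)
Your proposal is correct, and its overall skeleton matches the paper's: sensitivity $1/k$ from subsample-and-aggregate over disjoint partitions, each while-loop pass treated as an $\varepsilon_0$-DP AboveThreshold instance, a bound of $O\big((u^*-u_0)/g\big)$ on the number of iterations, advanced composition over those iterations, and basic composition with the final $(\varepsilon,\delta)$ training run. The one place where you genuinely diverge is the key lemma bounding the iteration count, and there your argument is the stronger one. The paper simply asserts that the worst case is the alternating $\{\mathrm{True},\mathrm{False},\mathrm{True},\mathrm{False},\dots\}$ sequence (gaining $g$ every two iterations) without justifying why no other acceptance/rejection pattern can be worse; your counting argument proves this cleanly: since $\log_2 step$ performs a $\pm1$ walk starting at $0$ and terminating below $0$, rejections are at most acceptances plus one, and since each acceptance adds at least $g$ to a total accumulation capped (up to Laplace slack) at $u^*-u_0$, acceptances are at most $(u^*-u_0)/g$, giving $count \le 2a+1 = O\big((u^*-u_0)/g\big)$. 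A second, smaller difference is that you invoke the standard AboveThreshold theorem with noise scales $2\Delta/\varepsilon_0$ and $4\Delta/\varepsilon_0$, while the paper re-derives the per-iteration $\varepsilon_0$-DP guarantee via an explicit density-shift computation; these are equivalent in substance. Finally, you explicitly flag the genuine remaining subtlety—that $count$ is a data-dependent random variable, so advanced composition and the $u^*$ overshoot bound need high-probability treatment—which the paper's proof glosses over entirely; making that rigorous would actually strengthen the paper's argument rather than merely reproduce it.
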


\begin{proof}
The execution of the main loop (line 12) can be treated as running a sequence of procedures $\mathcal{M}_1$, $\mathcal{M}_2$, ...,  $\mathcal{M}_t$, ..., $\mathcal{M}_T$, where $\mathcal{M}_t: \mathcal{D}_{train}, \mathcal{S}, \{u_s^{(t)}\}, u^{(t)} \rightarrow s^{(t)}, u^{(t+1)}$. Fix any two neighbouring training set $\mathcal{D}$ and $\mathcal{D'}$, and let the outputs on them (with the same set of hyperparameters $\mathcal{S}$, $\{u_s^{(t)}\}$, and $u^{(t)}$) be $\mathcal{A}$ and $\mathcal{A}'$, respectively. We first prove that every single mechanism $\mathcal{M}_t$ is differential private, and the result follows from the advanced composition.

Firstly, we bound the $\ell_1$ sensitivity of $u_s^{(t)}(\mathcal{D})$, which is the maximum change in $\ell_1$ norm caused by adding or removing one sample from $\mathcal{D}$. Observe that the output accuracy score for each split is bounded by 1, and therefore, the maximum change caused by one split is at most $1/k$. Secondly, suppose the output $\mathcal{A}$ is $s^{(t)}=k$, we define $u_\mathcal{S}(\mathcal{D})= {{\rm max}}_{s<k}(u_s^{(t)}(\mathcal{D}) + \gamma_s^{(t)})$, representing the maximum noisy utility of all hyperparameters tried on $\mathcal{D}$.
We then fix the values of $\{\gamma_s^{(t)} | s<k\}$. That is, we assume the two runs on $\mathcal{D}$ and $\mathcal{D}'$ share the same value of noise assigned for the corresponding hyperparameter candidate's utility. Note that although this will weaken the privacy protection effect (but easy for analysis) since we somewhat reduce the amount of the uncertainty underlying the algorithm, as we will show later, it is still sufficient to obtain the desired privacy loss bound. 
After fixing, the randomness on the output is over $\tilde{u}^{(t)}$ and $\gamma_k^{(t)}$. 
The probability that $\mathcal{M}^{(t)}$ on $\mathcal{D}$ outputs $\mathcal{A}^{(t)}$ can be bounded as follows. Let $\Delta u_\mathcal{S} = u_\mathcal{S}(\mathcal{D}) - u_\mathcal{S}(\mathcal{D}’)$, $\Delta u_k^{(t)}=u_k^{(t)}(\mathcal{D}') - u_k^{(t)}(\mathcal{D})$, then
\begin{equation}
\setlength{\arraycolsep}{1pt}
\begin{split}
    p(&\mathcal{A}^{(t)} = \{s^{(t)}, u^{(t)}\}) 
    = p( u_\mathcal{S}(\mathcal{D}) < \tilde{u}^{(t)}\leq u_k^{(t)}(\mathcal{D}) + \gamma_k^{(t)})\\
    =&\int_{\tilde{u}}
    \int_{\gamma_k}
    p(\tilde{u}^{(t)}=\tilde{u})
    \cdot  p(\gamma_k^{(t)}=\gamma_k)
    \cdot \textbf{1}[u_\mathcal{S}(\mathcal{D}) < \tilde{u} \leq u_k^{(t)}(\mathcal{D}) + \gamma_k]d\gamma_k d\tilde{u}\\
    =&
    \int_{\tilde{u}}
    \int_{\gamma_k}
    p(\tilde{u}^{(t)}=\tilde{u}+\Delta u_\mathcal{S})
    \cdot  p(\gamma_k^{(t)}=\gamma_k + \Delta u_\mathcal{S}+\Delta u_k^{(t)}) \\ 
    &~~\cdot \textbf{1}[u_\mathcal{S}(\mathcal{D}) < \tilde{u}+\Delta u_\mathcal{S} \leq u_k^{(t)}(\mathcal{D}) + (\gamma_k+ \Delta u_\mathcal{S}+\Delta u_k^{(t)})]d\gamma_k d\tilde{u}\\
    \leq& \int_{\tilde{u}}
    \int_{\gamma_k}
    e^{\varepsilon_0 / 2}p(\tilde{u}^{(t)}=\tilde{u})
    \cdot  e^{\varepsilon_0 / 2}p(\gamma_k^{(t)}=\gamma_k)
    \cdot \textbf{1}[u_\mathcal{S}(\mathcal{D}') < \tilde{u} \leq u_k^{(t)}(\mathcal{D}') + \gamma_k]d\gamma_k d\tilde{u}\\
    \leq& e^{\varepsilon_0} p(\mathcal{A}^{'(t)} = \{s^{(t)}, u^{(t)}\}) 
\end{split}
\end{equation}
% We then change the variable to relate it to the neighboring dataset $\mathcal{D}'$.
% \begin{equation}
% \begin{split}
%     p(&\mathcal{A}^{(t)} = \{s^{(t)}, u^{(t)}\}) 
%     =
%     \int_{\tilde{u}}
%     \int_{\gamma_k}
%     p(\tilde{u}^{(t)}=\tilde{u}+\Delta u_\mathcal{S})
%     \cdot  p(\gamma_k^{(t)}=\gamma_k + \Delta u_\mathcal{S} \\ 
%     &~~~~~~~+\Delta u_k^{(t)})
%     \cdot \textbf{1}[u_\mathcal{S}(\mathcal{D}')-s < \tilde{u} \leq u_k^{(t)}(\mathcal{D}') + \gamma_k]d\gamma_k d\tilde{u}\\
%     \leq& \int_{\tilde{u}}
%     \int_{\gamma_k}
%     e^{\varepsilon_0 / 2}p(\tilde{u}^{(t)}=\tilde{u})
%     \cdot  e^{\varepsilon_0 / 2}p(\gamma_k^{(t)}=\gamma_k)
%     \cdot \textbf{1}[u_\mathcal{S}(\mathcal{D}') < \tilde{u} \leq u_k^{(t)} + \gamma_k]d\gamma_k d\tilde{u}\\
%     \leq& e^{\varepsilon_0} p(\mathcal{A}^{'(t)} = \{s^{(t)}, u^{(t)}\}) 
% \end{split}
% \end{equation}
Therefore, each mechanism $\mathcal{M}_t$ is $\varepsilon_0$-differential private. In the following, we bound $T$, which is the total number of mechanisms in the sequence, i.e., the number of iterations. 
Although it is  difficult to analyze the behavior of the doubling step in the randomized setting, we fortunately find that the worst-case scenario is easy to attain. If we record the value of the variable $isAccumulated$ over each iteration, the outcome sequence in the worst case will be
\begin{equation}
    \{{{\rm True, False, True, False, True, False, ...}}\}
\end{equation}
This behavior corresponds to the zig-zag phenomenon in the $step$-$u$ plot (See the top part in Figure~\ref{fig:side:a}). 
In the worst case, the utility only gains $g$ every two iterations from the beginning, and we know that the utility is upper-bounded by $1$. Thus, the total number of iterations can be bounded by $O(\frac{u^*-u_0}{g})$. 
Using advanced composition, the additional privacy loss is bounded by $\tilde{O}(\sqrt{\frac{u^*-u_0}{g}})\varepsilon_0$.
Finally, we run the DP-SGD using the best parameter with the privacy parameter of ($\varepsilon$, $\delta$). From basic composition,  the total privacy parameter of the algorithm is thus ($\varepsilon+\tilde{O}(\sqrt{\frac{u^*-u_0}{g}}) \varepsilon_0, \delta$).
\end{proof}

\textbf{Discussion.} We have adopted a worst-case analysis to bound $T$, i.e., the number of iterations. Therefore, the privacy loss bound is upper-bounded by the final result. However, benefiting from the incorporation of the doubling step, the empirical privacy loss bound is even better. The intuition is that the doubling step would bound $T$ by $O({{\rm log_2}}\frac{u^*-u_0}{g})$, if the algorithm was deterministic.
We then run a simulation to show the actual privacy loss bound. The number of partitions $k$ is set to 10. The additional privacy parameter $\varepsilon_0$ is set to 0.1. We set the number of hyperparameters to 100 and randomly draw 100 samples from the uniform distribution over $(0, 1)$, corresponding to each $u_s$. We set $g$ to 0.01. Figure~\ref{fig:side:a} visualizes the doubling step process. Each point ($u$, $step$) represents the state after an iteration. ($u$, $step$) will start at (0, 1) at the beginning, and after each iteration, it will either move left by a half or move right-top by ($step$, $u$ + $step$)). The algorithm terminates once it reaches the line of $step=0$ or $u=1$. 
The red short line at the left bottom shows the behavior of the upper bound within the same iterations.
We can see that the utility accumulates exponentially at the beginning and converges fast w.h.p (over 99.9\% here).
We then vary the value of $\frac{u^*-u_0}{g}$ to plot the corresponding empirical bound for $T$ in Figure~\ref{fig:side:b}. Its dependency is almost logarithmic with $\frac{u^*-u_0}{g}$.
% further reducing the additional privacy parameter.

\begin{figure}
  \begin{minipage}[t]{0.45\linewidth}
    \centering
    \includegraphics[scale=0.4]{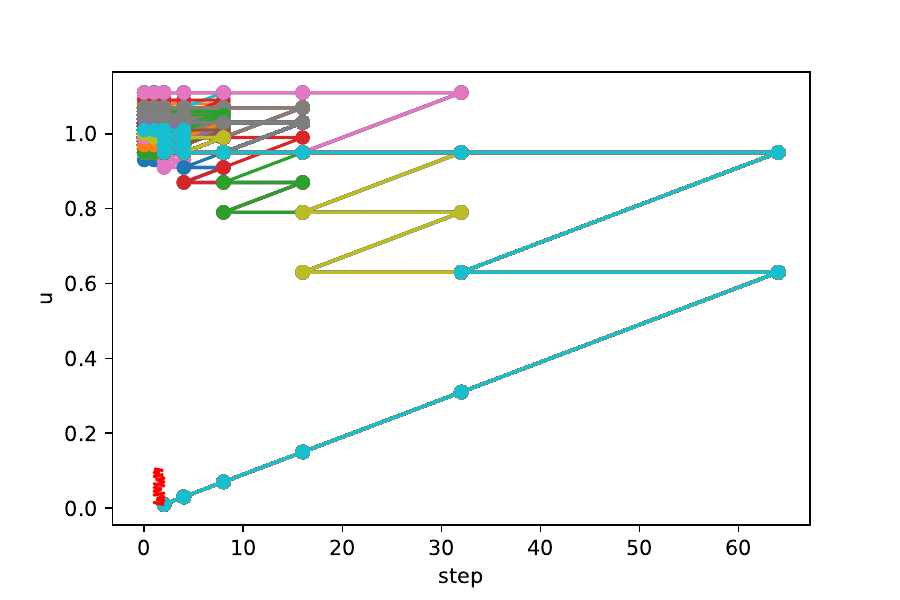}
    \caption{Visulaization of $step$-$u$ line over each iteration (starting from left bottom, ending at left top) for 1000 different random seeds (shown in color, except for red).}
    \label{fig:side:a}
  \end{minipage}%
  \hspace{5mm}
  \begin{minipage}[t]{0.45\linewidth}
    \centering
    \includegraphics[scale=0.4]{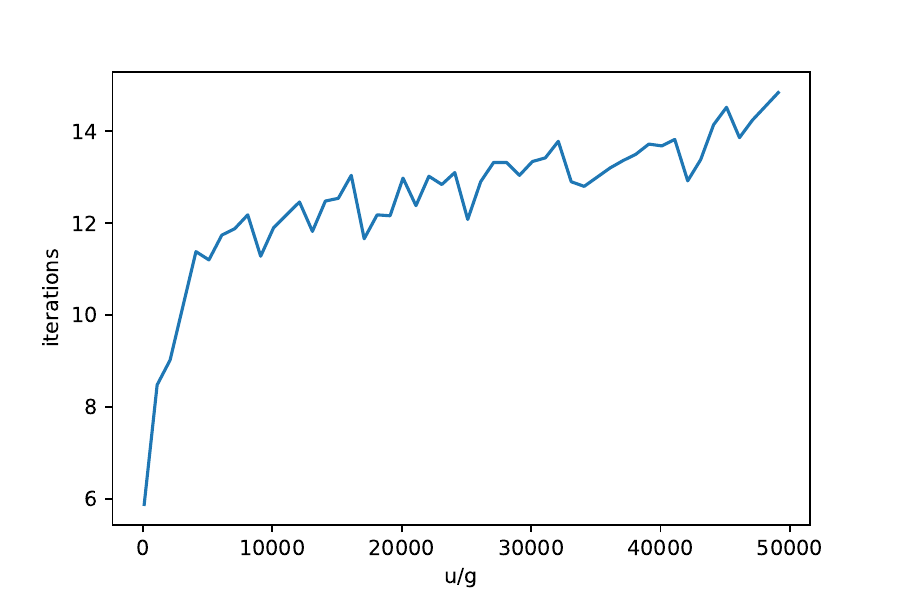}
    \caption{Plot of \#iterations versus $\frac{u^*-u_0}{g}$. As can be seen, in practice, it is nearly a logarithmic dependency.}
    \label{fig:side:b}
  \end{minipage}
\end{figure}

\section{Conclusion}
In this paper, we propose an ML algorithm-agnostic framework for hyperparameter tuning with differential privacy under rigorous privacy guarantees. Compared to existing differentially private hyperparameter tuning methods that suffer from large hyperparameter search space, our additional privacy loss parameter is free from the size of the hyperparameter candidates set and the original privacy parameter of DP-SGD. 
Instead, it correlates with the final utility of the tuned model and is upper-bounded by the squared root of the utility term.
Therefore, it allows us to perform hyperparameter tuning on a larger range, even with a grid search, leading to potentially higher utility. 
We believe that our work would be meaningful in the field of privacy-preserving machine learning, and would be valuable for future research in this area.

\newpage
\appendix

% \section{Algorithm:Propose-test Hyperparameter Tuning with Doubling Step}

% Optionally include extra information (complete proofs, additional experiments and plots) in the appendix.
% This section will often be part of the supplemental material.

\end{document}